\newcommand{\XX}{\mathbf{X}}
\newcommand{\TT}{\mathbf{T}}
\newcommand{\ttt}{\mathbf{t}}
\newcommand{\OO}{\mathbf{O}}
\newcommand{\ooo}{\mathbf{o}}
\newcommand{\YY}{\mathbf{Y}}
\newcommand{\yyy}{\mathbf{y}}
\newcommand{\WW}{\mathbf{W}}
\newcommand{\www}{\mathbf{w}}
\newcommand{\UU}{\mathbf{U}}
\newcommand{\uuu}{\mathbf{u}}
\newcommand{\R}{\mathbb{R}}
\title{\LARGE \bf Simultaneous Learning from Human Pose and Object Cues for Real-Time Activity Recognition}
\author{
    Brian Reily$^{1}$, Qingzhao Zhu$^{1}$, Christopher Reardon$^{2}$, and Hao Zhang$^{1}$%
    \thanks{*This work was partially supported by NSF IIS-1942056, ARL DCIST CRA W911NF-17-2-0181, USAFA FA7000-18-2-0016, and ARO W911NF-17-1-0447.}%
    \thanks{$^{1}$Brian Reily, Qingzhao Zhu and Hao Zhang are with Human-Centered Robotics Laboratory
    at the Colorado School of Mines, Golden, CO, 80401, USA.
    Email: \{breily, zhuqingzhao, hzhang\}@mines.edu.}%
    \thanks{$^{2}$Christopher Reardon is with U.S. Army Research Laboratory, Adelphi, MD, 20783, USA.
    Email: christopher.m.reardon3.civ@mail.mil.}%
}
\begin{document}

\newtheorem{definition}{Definition}
\newtheorem{theorem}{\textbf{Theorem}}
\newtheorem{lemma}{\textbf{Lemma}}
\newtheorem{proposition}{Proposition}
\newtheorem{property}{Property}
\newtheorem{observation}{Observation}
\newtheorem{corollary}{\textbf{Corollary}}

\maketitle
\thispagestyle{empty}
\pagestyle{empty}

\begin{abstract}

Real-time human activity recognition plays an essential role 
in real-world human-centered robotics applications,
such as assisted living and human-robot collaboration.
Although previous methods based on skeletal data to encode human poses
showed promising results on real-time activity recognition,
they lacked the capability to consider the context provided by objects within the scene and in use by the humans,
which can provide a further discriminant between human activity categories.
In this paper, we propose a novel approach to 
real-time human activity recognition,
through simultaneously learning from observations
of both human poses and objects involved in the human activity.
We formulate human activity recognition as a joint optimization problem under a unified mathematical framework,
which uses a regression-like loss function to integrate human pose and object cues
and defines structured sparsity-inducing norms to identify discriminative body joints and object attributes.
To evaluate our method,
we perform extensive experiments on two benchmark datasets
and a physical robot in a home assistance setting.
Experimental results have shown that
our method outperforms previous methods
and obtains real-time performance for human activity recognition with a processing speed of $10^4$ Hz.


\end{abstract}

\section{Introduction}
\label{sec:intro}




Real-time human activity recognition is an essential capability of robots
in human-centered robotics applications,
such as assisted living, service robotics, human-robot teaming, and human-robot interaction
\cite{kruijff2014experience,fong2017human,schulz2018preferred,zhang2015adaptive}.
It allows intelligent robots to understand human behaviors in a timely manner
in order to effectively assist and interact with humans.
Human activity recognition by robots in the real world is a difficult problem,
complicated by both variations in human appearances and poses, 
and by challenges such as illumination changes or occlusions.
Given these challenges, 
it is important for a robot to extract as much relevant information as possible from 
sensory observations.
For example, as illustrated in Figure \ref{fig:motivation},
this information can consist of humans themselves,
such as human poses encoded by the human skeleton representation,
and the context from the objects in the environment
and objects that the human is interacting with,
which provide additional cues to recognize activities.
Moreover,
in most real-world robotics applications,
and especially in time-critical scenarios,
activity recognition must occur in real-time
so that a robot can promptly interact with and assist humans.

Due to the importance of activity recognition,
many methods have been introduced over the past few decades \cite{han2017space,vrigkas2015review,aggarwal2011human,aggarwal2014human}.
Especially, techniques based on skeletal data from structured-light cameras \cite{shotton2011real}
have attracted  increasing attention,
due to skeletal data's real-time performance
and invariance to viewing distances and angles.
For example,
the methods can be implemented based on hand-crafted skeletal features
\cite{reily2018skeleton},
a concatenation of multiple types of features \cite{masood2011measuring,wang2014learning,yu2014discriminative},
and learning skeleton-based representations, e.g.,
by sparse optimization \cite{han2017simultaneous,fanello2013keep} or deep learning \cite{wang2018temporal,wang2019deep}.
These methods generally have the limitation of not learning from the
context of objects that the human is interacting with.
Although several methods used object information \cite{wei2013modeling,koppula2013learning},
they require explicit knowledge of the objects,
such as object affordances that are typically manually defined to describe how an object can be interacted with. 
Moreover, previous methods cannot estimate the importance of the objects 
in recognizing human activities.

\begin{figure}[t]
    \centering
    \vspace{5pt}
    \includegraphics[width=0.475\textwidth]{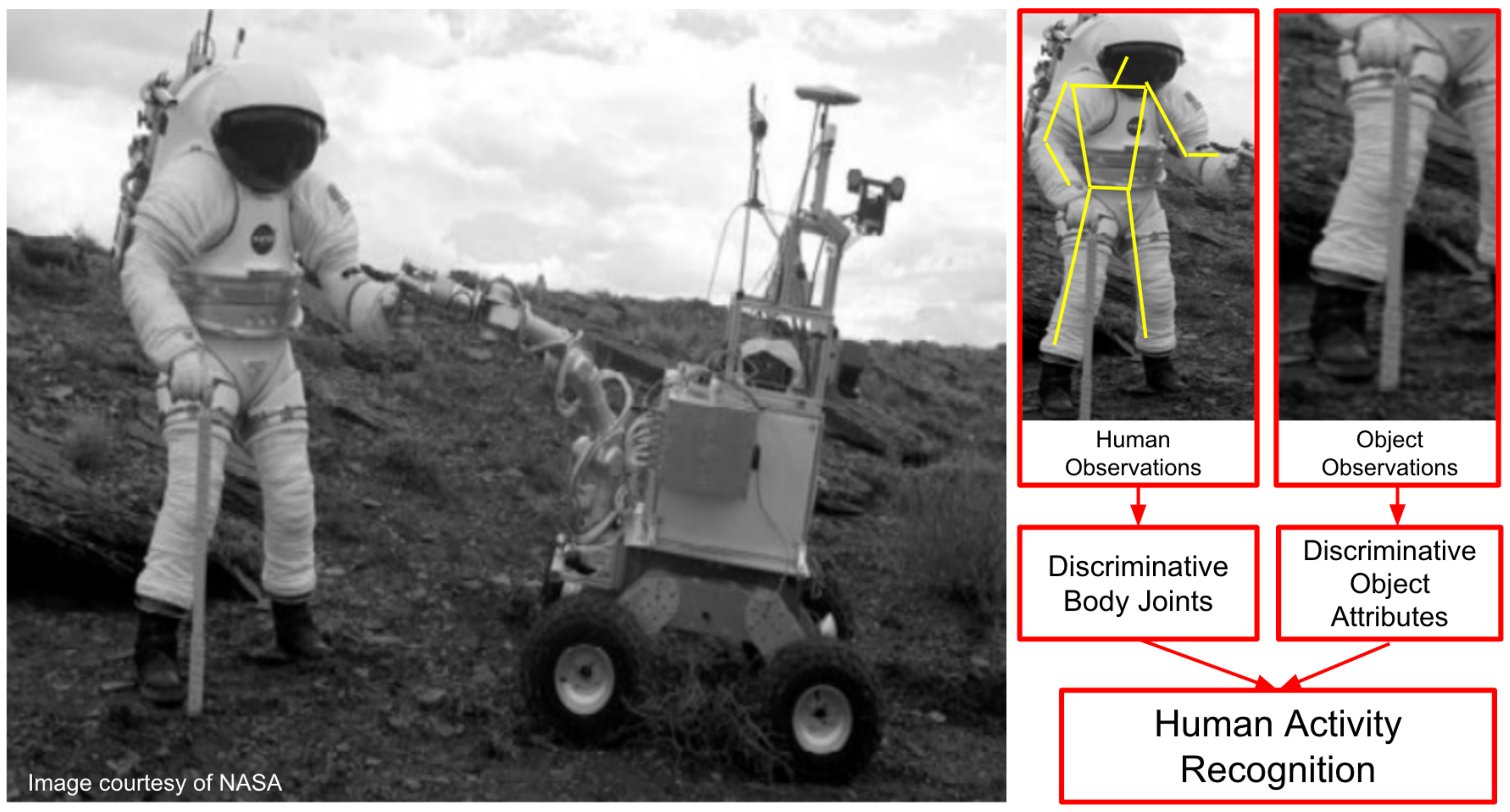}
    \vspace{-3pt}
    \caption{
    A motivating example of integrating observations of human poses and of the objects involved in the task to understand human activities.
    The objects within the scene and in use by the humans provide additional cues beside human poses to recognize activities.
    }
    \label{fig:motivation}
    \vspace{-6pt}
\end{figure}

In this paper, we propose a principled method for real-time human activity recognition
based on learning simultaneously from observations of humans and objects.
Our approach formulates activity recognition as a regression-like optimization problem,
and applies structured norms as regularization terms to promote sparsity and identify
discriminative skeletal joints and object attributes.
This formulation is inspired by the fact that many activities rely solely
on a subset of joints (e.g., a waving activity uses only joints in the arm, not in the legs),
or can be recognized
based on context of objects in the scene 
(e.g., reading a book and typing on a laptop
at a table appear similar if only the human pose is considered).
By learning the importance of both to human activities,
the proposed method is able to identify and integrate more relevant information
to improve activity recognition accuracy.
Because classification is integrated in our regression-like convex objective function
(i.e., no separate classifier is needed),
our approach is capable of operating in real-time,
which makes it suitable for robotics applications with real-time needs.

This paper has two major contributions:
\begin{enumerate}
\item We propose a novel principled method
that formulates human activity recognition
as simultaneously learning from human and object observations
based on a unified regularized optimization framework.
The method identifies both discriminative skeletal joints
and discriminative object attributes,
and integrates classification with sparse representation learning in order
to enable a high processing speed for real-time recognition.

\item We implement a new iterative optimization algorithm to
solve the formulated regularized optimization problem that has dependent model parameters,
which holds a theoretical guarantee to find the optimal solution.
\end{enumerate}


\section{Related Work}\label{sec:related}



Activity recognition has been shown to be a critical ability for 
robots to work with people in real-world human-centered robotics applications
\cite{rossi2017user,stone2010ad,albrecht2018autonomous,heard2018diagnostic,al2017learning,zhang2015bio}.
This section provides a brief review of existing methods of skeleton-based representations 
and object-assisted activity recognition.



\subsection{Skeleton-Based Representations for Activity Recognition}

Among diverse human representations, skeleton-based representations attracted
extensive attentions since the availability of structured-light or color-depth cameras.
Skeleton-based representations can be based on joint positions, displacement,
orientation, 
and a combination of multiple joint features \cite{han2017space}.

Relative \emph{spatial displacement} between a pair of body joints
is one of the most commonly applied skeleton-based features.
For example, the normalized joint positions were employed to compute pairwise relative distances between joints as features to categorize activities using extreme learning machines \cite{chen2013online}.
Euclidian distances betweens joint pairs were computed as skeletal features
to recognize activities \cite{wang2014learning}.
Rahmani \textit{et al.} \cite{rahmani2014real} chose a reference joint,
such as the torso center,
and computed relative distances to other body joints.
In addition, 
\emph{orientation} of a segment between two joints in space or time
is widely studied.
Boubou and Suzuki \cite{boubou2015classifying} implemented the 
histogram of oriented velocity vector features
that calculate joint velocities between frames and use distributions of joint orientations to classify human activities.
Yang and Tian \cite{yang2014effective} designed 
a descriptor to include joint orientation differences between frames as features.
\emph{Joint positions} were also directly applied 
as input into long short-term memory networks and recurrent neural networks to recognize activities \cite{zhu2016co}.
Recent methods \emph{integrated multiple features}.
Luo \textit{et al.}\cite{luo2014spatio} fused sparse-coding skeleton features with a representation named center-symmetric motion local ternary pattern,
which extracts spatial and temporal gradients as features.
A learning-based method is proposed in \cite{han2017simultaneous}
to estimate the weights of feature modalities for activity recognition.

While most previous methods only consider skeletal information to build representations,
our proposed method focuses on
integrating cues from both humans and objects for activity recognition,
in a unified optimization framework.

\subsection{Object-Assisted Activity Recognition}

A few methods were implemented to take into account of objects for activity recognition.
Koppula \textit{et al.} \cite{koppula2013learning,koppula2016anticipating} used human and object trajectories as a particle of an anticipatory temporal conditional random field for activity recognition.
Yu \textit{et al.} \cite{yu2014discriminative} combined human body joint and object positions
as representations and used a boosting technique to identify human activities.
Wei \textit{et al.} \cite{wei2013modeling} implemented a human-object interaction model
that combines spatiotemporal human body joint displacements with object recognition and localization in sequence of frames.
Hu \textit{et al.} \cite{hu2015jointly} implemented a heterogeneous method for joint features learning that concatenates spatial displacements of the skeleton data over a sequence of frames, as well as color and depth patterns and their gradients around each joint.
Besides extracting joints and objects as independent features,
probabilistic models based on human poses and object interactions were also designed
for activity recognition \cite{hu2016exemplar}.

Previous object-assisted activity recognition methods
cannot estimate the importance of objects and skeletal features in recognizing
activities.
Several methods \cite{koppula2016anticipating,hu2016exemplar}
also require predefined knowledge about the objects
such as object affordance. 
Our proposed method provides the capability of not only integrating human and object cues,
but also estimate their importance when recognizing human activities.


\section{The Proposed Joint Learning Approach}
\label{sec:approach}


\emph{Notation.}
In this paper, matrices are denoted using boldface uppercase letters, and vectors
using boldface lowercase letters.
For a matrix $\mathbf{M} = \{ m_{ij} \} \in \R^{p \times q}$, we refer to its $i$-th
row as $\mathbf{m}^i$ and its $j$-th column as $\mathbf{m}_j$, and $m_{ij}$ as the
element in the $i$-th row and the $j$-th column.
The Frobenius norm of a matrix $\mathbf{M}$ is computed as
$\|\mathbf{M}\|_F = \sqrt{\sum_{i=1}^p \sum_{j=1}^q m_{ij}^2}$.
For dimensionality, $d_T^j$ represents the dimensionality of the $j$-th body joint of
the human and $d_O^m$ indicates the dimensionality of the $m$-th attribute
modality of an object.

\subsection{Problem Formulation}


We assume that the input data instance set, $\XX = \{ \TT, \OO \}$,
consists of paired observations of a human and  objects.
$\TT = [ \ttt_1, \dots, \ttt_N ] \in \R^{d_T \times N}$ denotes the matrix
of observations of the human, where $\ttt_i \in \R^{d_T}$ is the feature
vector describing the human's $J$ joints in the $i$-th data instance.
Subsections in $\ttt_i$ describe individual joints, with
$\ttt_i^j \in \R^{d_T^j}$ describing the $j$-th joint in the $i$-th data instance.
Each body joint is described by its displacement from the center of the body
(typically, the torso `joint' in many skeletal representations).
Observations of the objects are encoded in the matrix
$\OO = [ \ooo_1, \dots, \ooo_N ] \in \R^{d_O \times N}$, where
$\ooo_i \in \R^{d_O}$ is the feature vector representing all the objects
in the $i$-th data instance.
Each object feature vector is sub-divided to encode multiple objects,
each with $M$ attribute modalities,
such that $\ooo_i^{o_m} \in \R^{d_O^m}$ represents the features describing the
$m$-th attribute modality of the $o$-th object in the $i$-th data instance.

Activity category labels for each training data instance are denoted in the category indicator matrix
$\YY = [ \yyy^1; \dots; \yyy^N ] \in \R^{N \times C}$, 
where $\yyy^i \in \R^C$ denotes the category indicator vector for the $i$-th data instance
and $C$ denotes the number of human activity categories.
Specifically, $y_{ic}$ indicates the probability that the $i$-th
data instance $\mathbf{x}_i = \{\ttt_i, \ooo_i\}$ belongs to the $c$-th activity category.
In the training phase, these probabilities are either $0$ (if the data instance
does not belong to that category) or $1$ (if the data instance belongs to that
category).


We formulate human activity recognition based upon both skeletal observations and object observations as a regression-like optimization problem:
\begin{align}
\label{eq:loss}
\min_{\WW, \UU} & \;\; \|\TT^\top \WW + \OO^\top \UU - \YY \|_F^2
\end{align}
where $\WW = [ \www_1, \dots, \www_C ] \in \R^{d_T \times C}$ represents a weight matrix indicating
the importance of $\TT$ to the activity category labels, and
$\UU = [ \uuu_1, \dots, \uuu_C ] \in \R^{d_O \times C}$ is a weight matrix doing the
same for $\OO$.
$\www_c \in \R^{d_T}$ represents weights of joints with respect to $c$-th category,
with subsections $\www_c^j \in \R^{d_T^j}$ representing the weights
of the $j$-th joint to the $c$-th category.
Similarly, $\uuu_c \in \R^{d_O}$ represents weights of object attributes with respect to $c$-th
category, with subsections
$\uuu_c^{o_m} \in \R^{d_O^m}$ representing weights of the $m$-th attribute of the $o$-th object
to the $c$-th category.

\subsection{Learning Discriminative Joints and Object Attributes}

When recognizing activities, specific body joints and object attributes are typically more discriminative.
For example, joints in the arm are much more important when a human is retrieving an
object from a shelf than joints in the leg would be.
Similarly, attributes describing the object being retrieved would allow a robot
to understand, for instance, whether the human is about to work on a laptop
or read a book.

In order to identify discriminative joints, 
we introduce the \emph{skeletal norm} on the weight matrix $\WW$, defined as: 
\begin{equation}
\label{eq:normS}
\| \WW \|_S = \sum_{c=1}^C \sum_{j=1}^{J} \| \www_c^j \|_2
\end{equation}
This skeletal norm enforces the $\ell_2$-norm within a joint feature and the $\ell_1$-norm between
joints in order to force sparsity and identify discriminative joints (Figure \ref{fig:norms}).
\begin{equation}
\label{eq:with_S}
\min_{\WW, \UU}  \|\TT^\top \WW + \OO^\top \UU - \YY \|_F^2 + \lambda_1 \| \WW \|_S
\end{equation}

Similarly, we also introduce a new \emph{attribute norm}
to learn the importance of various attribute modalities of the objects.
Attribute modalities can describe the color histograms (e.g., red, green, and blue), 
shape (e.g., gradient features), texture, or relationships of the object to the human (e.g., distances).
We define the attribute norm over the weight matrix $\UU$ as:
\begin{equation}
\label{eq:normA}
\| \UU \|_A = \sum_{c=1}^C \sum_{o=1}^{O} \sum_{m=1}^{M} \| \uuu_c^{o_m} \|_2
\end{equation}
The $\ell_2$-norm is employed to enforce similar weights within an attribute modality,
and the $\ell_1$-norm is used between these attribute modalities to enforce sparsity 
in order to identify discriminative object attributes (Figure \ref{fig:norms}).


\begin{figure}
    \centering
    \subfigure[Skeletal Norm]{
        \label{fig:normS}
        \centering
        \includegraphics[height=1.3in]{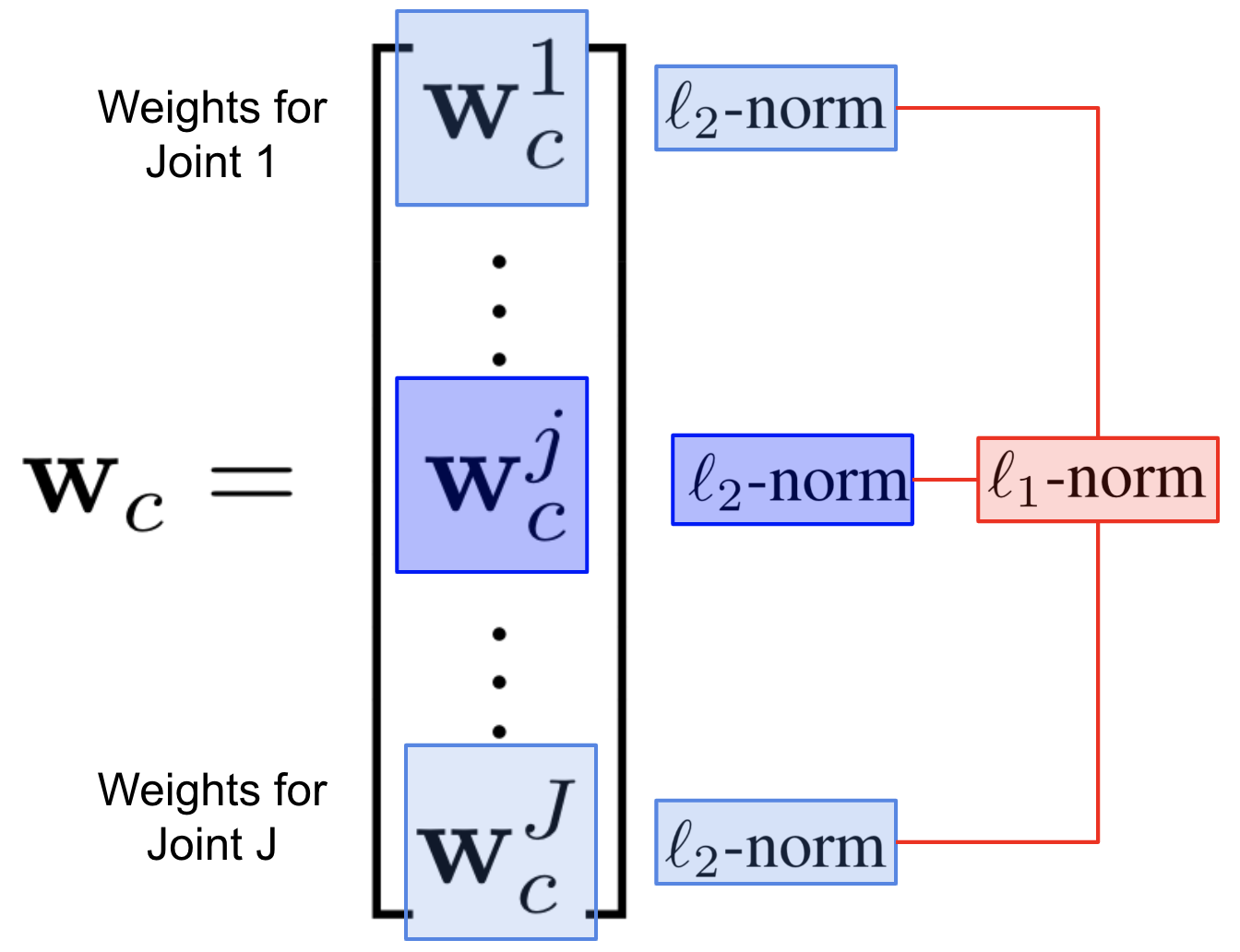}
    }%
    \subfigure[Attribute Norm]{
        \label{fig:normA}
            \centering
        \includegraphics[height=1.3in]{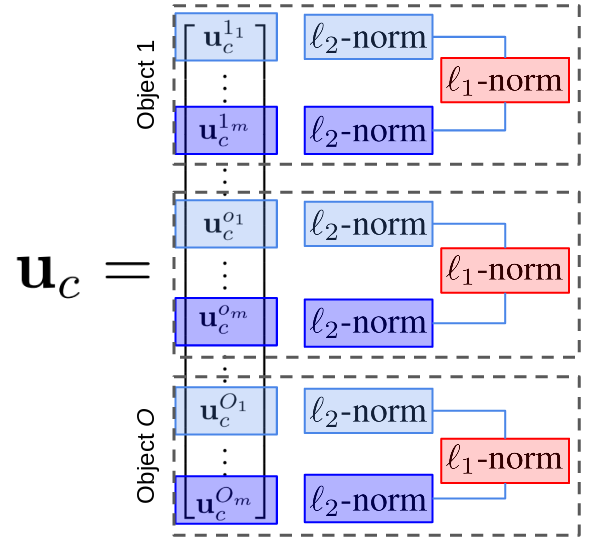}
    }
    \caption{
    Illustrations of the proposed regularization norms.
    Figure \ref{fig:normS} shows the \emph{skeletal norm} $\| \WW \|_S$.
    The $\ell_2$-norm is used within joints and the $\ell_1$-norm is used
    between joints to enforce sparsity and the identification of
    discriminative joints.
    Figure \ref{fig:normA} shows the \emph{attribute norm} $\| \UU \|_A$.
    The $\ell_2$-norm is used within attribute modalities and the $\ell_1$-norm is
    applied between modalities to enforce sparsity and identify
    discriminative attributes.
    } \label{fig:norms}
    \vspace{-6pt}
\end{figure}

Then, our final formulation formulates activity recognition 
as a regularized optimization problem:
\begin{equation}
\label{eq:full}
\min_{\WW, \UU}  \|\TT^\top \WW + \OO^\top \UU - \YY \|_F^2 + \lambda_1 \| \WW \|_S + \lambda_2 \| \UU \|_A
\end{equation}
where $\lambda_1$ and $\lambda_2$ denote the hyperparameters to balance the importance
of the loss function and regularization norms.

\subsection{Recognizing Human Activities}

After we solve the regularized optimization problem in Eq. (\ref{eq:full})
using Algorithm \ref{alg:solution},
we obtain the optimal weight matrices
$\WW^{*} = [ \www_1^{*}, \dots, \www_C^{*} ] \in \R^{d_T \times C}$ and
$\UU^{*} = [ \uuu_1^{*}, \dots, \uuu_C^{*} ] \in \R^{d_O \times C}$.
Each column $\www_c^{*}$ and $\uuu_c^{*}$ denotes the importance of, respectively,
an observation of a human $\ttt$ and objects $\ooo$
to recognize the $c$-th activity category.
Given a new observation $\mathbf{x} = \{\ttt, \ooo\}$, 
the activity category
$y \left( \ttt, \ooo \right)$ is classified by:
\begin{equation}
y \left( \ttt, \ooo \right) = \max_{c} \ttt^\top \www_c^{*} + \ooo^\top \uuu_c^{*}
\end{equation}
Since the objective function in our formulation can be used to perform classification,
no separate classifiers is needed. 

By learning the weight matrices for both human and object observations, 
our approach explicitly identifies discriminative human joints and object attributes.
For example,
consider the learned human observation weight matrix $\WW^{*}$:
\begin{align}
\label{eq:W}
\WW^{*} =
\begin{bmatrix}
\www_1^1 & \cdots & \www_c^1 & \cdots & \www_C^1 \\
\vdots & \ddots & \vdots & & \vdots \\
\www_1^j & \cdots & \www_c^j & \cdots & \www_C^j \\
\vdots & & \vdots & \ddots & \vdots \\
\www_1^J & \cdots & \www_c^J & \cdots & \www_C^J \\
\end{bmatrix}
\end{align}
where $\www_c^j$ represents the importance of
the $j$-th joint to the $c$-th activity category.
The sum-value of all elements within the sub-matrix $\www_c^j$ indicates the relative importance
of the $j$-th human joint when recognizing the $c$-th activity category.
The sum-value of all elements in the row vector $\www^j$
indicates the importance of the $j$-th human body joint to recognize all activity categories.
Similarly, $\UU$ can also be used to analyze and identify which attributes of which objects are important
when recognizing human activities. 

\subsection{Optimization Algorithm}

\begin{algorithm}[tb]
\SetAlgoLined
\SetKwInOut{Input}{Input}
\SetKwInOut{Output}{Output}
\SetNlSty{textrm}{}{:}
\SetKwComment{tcc}{/*}{*/}

\small

\Input{
$\TT = [ \ttt_1, \dots, \ttt_N ] \in \R^{d_{T} \times N}$,
$\OO = [ \ooo_1, \dots, \ooo_N ] \in \R^{d_{O} \times N}$ and
$\YY = [ \yyy^1; \dots; \yyy^N ] \in \R^{N \times C}$.
}
\Output{
$\WW^{*} = \WW \left( i \right) \in \R^{d_T \times C}$ and
$\UU^{*} = \UU \left( i \right) \in \R^{d_O \times C}$.
}
\BlankLine

Let $i = 1$. Initialize $\WW$ and $\UU$ randomly.

\Repeat{convergence}{
Calculate $\mathbf{D}^c_S \left( i + 1 \right)$ for $c \in 1, \dots, C$.

Calculate $\mathbf{D}^c_A \left( i + 1 \right)$ for $c \in 1, \dots, C$.

Calculate $\www_c \left( i + 1 \right)$ via
    Eq. (\ref{eq:deriv_w}) for each $c \in 1, \dots, C$.

Calculate $\uuu_c \left( i + 1 \right)$ via
    Eq. (\ref{eq:deriv_u}) for each $c \in 1, \dots, C$.

$i = i + 1$.
}

\Return $\WW^{*}$ and $\UU^{*}$

\caption{An iterative algorithm to solve the formulated optimization
problem in Eq. (\ref{eq:full}).}
\label{alg:solution}
\end{algorithm}

The formulated optimization problem in Eq. (\ref{eq:full}) is difficult to solve  
because the regularization norms $\|\WW\|_S$ and $\|\UU\|_A$ are not smooth and because we need
to simultaneously find the optimal solutions for $\WW$ and $\UU$, both of which the final
solution depends on.
To solve this, we propose a new iterative optimization solver as presented in Algorithm \ref{alg:solution}.

We calculate the derivative of Eq. (\ref{eq:full}) with respect to $\www_c$
in order to solve $\WW$:
\begin{equation}
\TT \TT^\top \www_c + \TT \OO^\top \uuu_c - \TT \yyy_c + \lambda_1 \mathbf{D}^c_S \www_c = \mathbf{0}
\end{equation}
\vspace{-9pt}
\begin{equation}
\label{eq:deriv_w}
\www_c = \left( \TT \TT^\top + \lambda_1 \mathbf{D}^c_S \right)^{-1} \TT \left( \yyy_c - \OO^\top \uuu_c \right)
\end{equation}
where $\mathbf{D}^c_S$ is a block diagonal matrix with
$\frac{1}{2 \| \www_c^j \|_2} \mathbf{I}_{d_T^j}$ as the $j$-th block.

Similarly, we compute the derivative of Eq. (\ref{eq:full}) with respect to $\uuu_c$
in order to solve $\UU$:
\begin{equation}
\OO \OO^\top \uuu_c + \OO \TT^\top \www_c - \OO \yyy_c + \lambda_2 \mathbf{D}^c_A \uuu_c = \mathbf{0}
\end{equation}
\vspace{-9pt}
\begin{equation}
\label{eq:deriv_u}
\uuu_c = \left( \OO \OO^\top + \lambda_2 \mathbf{D}^c_A \right)^{-1} \OO \left( \yyy_c - \TT^\top \www_c \right)
\end{equation}
where $\mathbf{D}^c_A$ is a block diagonal matrix having $O$ blocks.
Each of these diagonal blocks is composed of $M$ diagonal blocks, where the $m$-th
diagonal block is
$\frac{1}{2 \| \uuu_c^{o_m} \|_2} \mathbf{I}_{d_O^m}$.

Because the solution to $\www_c$ depends on both $\uuu_c$ and $\mathbf{D}^c_S$ (which
is dependent on $\www_c$), and the solution to $\uuu_c$ depends on both $\www_c$
and $\mathbf{D}^c_A$ (which is dependent on $\uuu_c$), an iterative optimization
algorithm is necessary to address this problem.
The proposed optimization solver is detailed in Algorithm \ref{alg:solution},
which alternately solving $\www_c$ and $\uuu_c$ until convergence.
This proposed algorithm holds a theoretical guarantee to converge to the optimal solution:

\begin{theorem}\label{thm1}
Algorithm 1 is guaranteed to converge to the optimal solution to the formulated regularized optimization problem in Eq. (\ref{eq:full}).
\end{theorem}

\begin{proof}
See supplementary materials\footnote{\url{hcr.mines.edu/publication/HAR_Supp.pdf}}.
\end{proof}

The time complexity of Algorithm \ref{alg:solution} is dominated by Steps (5) and (6), because
Steps (3) and (4) are trivial, executing in linear time of
$\mathcal{O} \left( C d_T \right)$ and
$\mathcal{O} \left( C d_O \right)$, respectively.
Steps (5) and (6) can be solved as a system of linear equations instead
of performing the matrix inverse, and are respectively
$\mathcal{O} \left( {d_T}^2 \right)$ and $\mathcal{O} \left( {d_O}^2 \right)$.

\section{Experimental Results}
\label{sec:results}


We assess our approach's performance
on two benchmark activity recognition datasets and using a physical robot as a case study.
In the experiments,
we used one type of skeletal features and three
different object attribute modalities.
The skeletal feature used is the displacement of each body joint
from the central torso joint.
The object attributes used are color, shape, and object-joint distance.
The color attribute is implemented using red, green, and blue histograms.
The shape attribute is implemented using the Histogram of Oriented Gradients (HOG) \cite{dalal2005histograms} features.
Finally, the object-joint distance attribute is implemented through calculating the
distance in 3D space from the object to each skeletal joint
to model the interaction between objects and the human.
We also evaluate our case study using a multinomial probability
distribution as the object attributes, showing our method's ability to
identify the most important objects with respect to specific human activities.

\begin{figure}[h]
    \centering
    \includegraphics[width=0.44\textwidth]{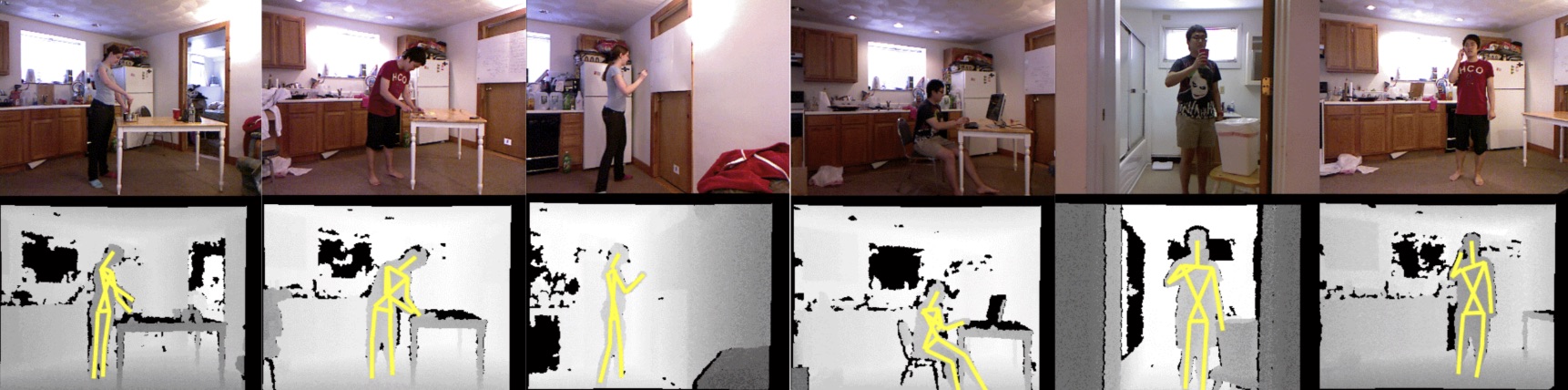}    
    \vspace{-4pt}
    \caption{Example images from the CAD-60 dataset.}
    \label{fig:cad_activities}
    \vspace{-6pt}
\end{figure}

\subsection{Results on Cornell Activity Dataset}

The Cornell Activity Dataset (CAD-60) has been widely used as
a standard benchmark dataset for activity recognition \cite{sung2012unstructured} in robotics.
The dataset consists of activities performed by 4 humans, with
each activity execution recorded as color images, depth images, and
annotated skeleton joint positions for 15 joints.
Our experiments used six activities that involve
the use of objects, including \emph{writing on whiteboard},
\emph{cooking (stirring)},
\emph{cooking (chopping)},
\emph{working on computer}, \emph{rinsing mouth with water}, and \emph{talking on the phone},
as illustrated in Figure \ref{fig:cad_activities}.
All objects existing in a scene were utilized,
even if they did not relate to the human
activity. For example, the whiteboard was still included in our experiments
when recognizing the cooking (stirring) human activity scenes.

\begin{table}[t]
    \centering
    \caption{Accuracy obtained by our approach on the CAD-60 dataset and comparisons
    to previous approaches.}
    \label{tab:cad_accuracy}
    \vspace{-4pt}
    \tabcolsep=0.3cm
    \begin{tabular}{|c|c|}
    \hline
    Approach & Accuracy \\
    \hline\hline
    Feature and Body Part Learning \cite{han2017simultaneous} & 83.93\% \\
    Joint Heterogeneous Features Learning \cite{hu2015jointly} & 84.10\% \\
    Spatiotemporal Interest Point \cite{zhu2014evaluating} & 87.50\% \\
    Feature-Level Fusion \cite{zhu2015fusing} & 87.50\% \\
    Pose Kinetic Energy \cite{shan20143d} & 91.90\% \\
    Sparse Coding Dictionary Learning \cite{qi2018learning} & 94.12\% \\
    Kinect + Pose machine \cite{das2017action} & 95.58\% \\
    \hline
    Our Approach (only \emph{skeletal norm}) & 86.86\% \\
    Our Approach (only \emph{attribute norm}) & 96.18\% \\
    \textbf{Our Approach} & \textbf{98.11\%} \\
    \hline
    \end{tabular}
    \vspace{-6pt}
\end{table}

The quantitative experimental results are listed in Table \ref{tab:cad_accuracy}.
It is observed that the proposed approach is able to identify
$98.11\%$ of these activity executions correctly.
This table also compares our results with other state-of-the-art methods
for human activity recognition, 
which indicates that our simultaneous
learning from both human observations and object observations
provides superior performance.
This table also shows that limiting our approach to only one of our
proposed sparsity inducing norms degrades our performance.
This drop off is significant when we only deoploy the \emph{skeletal norm} in the formulation,
showing that object attributes provide useful discriminative
information.

\subsection{Results on MSR Activity 3D Dataset}

We further evaluate our approach based on  the MSR Daily Activity 3D Dataset
\cite{li2010action}, a commonly used public dataset for benchmarking human
activity recognition approaches.
This dataset consists of daily human activities performed by ten human subjects,
which is recorded with color images, depth images, and annotated joint positions
for 20 skeletal joints.
Examples of color and depth images from activity categories involving the use of objects
are illustrated in Figure \ref{fig:msr_activities}.

\begin{figure}[h]
\centering
    \includegraphics[width=0.44\textwidth]{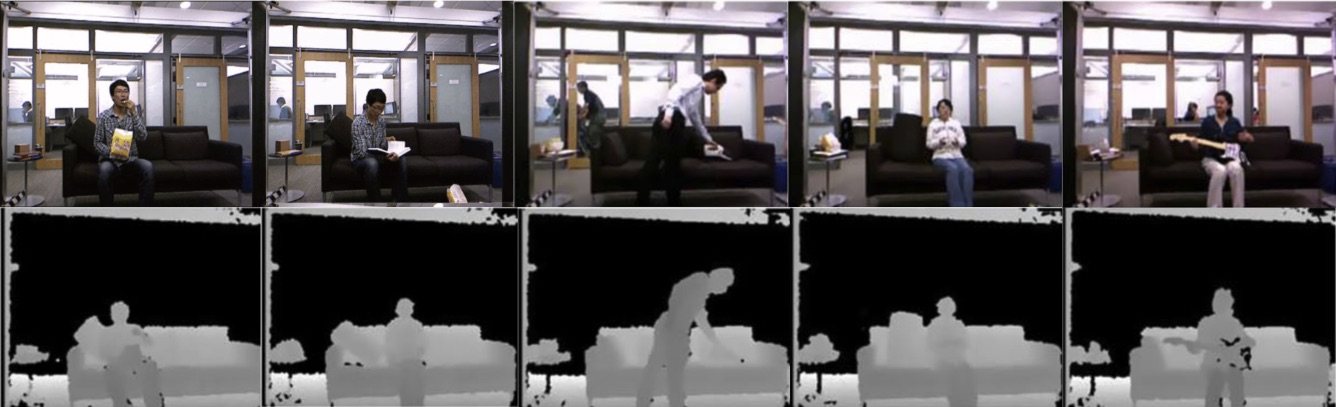}
    \vspace{-4pt}
    \caption{Example images from the MSR Daily Activity 3D dataset.}
    \label{fig:msr_activities}
    \vspace{-3pt}
\end{figure}

The quantitative experimental results are listed in Table \ref{tab:msr_accuracy}.
It is observed that our approach achieves an activity recognition accuracy of $97.71\%$.
Table \ref{tab:msr_accuracy} also compares our results with
existing state-of-the-art approaches that have also been
tested on this dataset, showing again that our simultaneous learning
from both human body joints and object attributes is effective
to recognize human activities.
Moreover, the results indicate the importance of the proposed
\emph{attribute norm}, as omitting this norm causes
our approach to loss the ability of identifying most important object attributes,
thus decreasing the recognition accuracy.

\begin{table}[htp]
    \centering
    \caption{Accuracy obtained by our approach on the MSR Daily Activity 3D dataset and comparisons to previous approaches.}
    \label{tab:msr_accuracy}
    \vspace{-4pt}
    \tabcolsep=0.35cm
    \begin{tabular}{|c|c|}
    \hline
    Approach & Accuracy \\
    \hline\hline
    Sparse Coding Dictionary Learning \cite{qi2018learning} & 68.75\% \\
    BIPOD \cite{reily2018skeleton} & 79.70\% \\
    Spatiotemporal Interest Point \cite{zhu2014evaluating} & 80.00\% \\
    Key-Pose-Motifs \cite{Wang_2016_CVPR} & 83.47\% \\
    Kinect + Pose machine \cite{das2017action} & 84.37\% \\
    Feature-Level Fusion \cite{zhu2015fusing} & 88.80\% \\
    3D joint+CS-Mltp (concatenate) \cite{luo2014spatio} & 92.50\% \\
    DL-GSGC \cite{luo2013group} & 95.00\% \\
    Joint Heterogeneous Features Learning \cite{hu2015jointly} & 95.00\% \\
    $\tau -test$ \cite{lu2014range} & 95.63\% \\
    \hline
    Our Approach (only \emph{skeletal norm}) & 82.00\% \\
    Our Approach (only \emph{attribute norm}) & 95.71\% \\
    \textbf{Our Approach} & \textbf{97.71\%} \\
    \hline
    \end{tabular}
\end{table}

\subsection{Results in Home Assistance Case Studies}

\begin{figure}
    \centering
    \subfigure[TurtleBot Platform]{
        \label{fig:platform}
        \centering
        \includegraphics[height=1.45in]{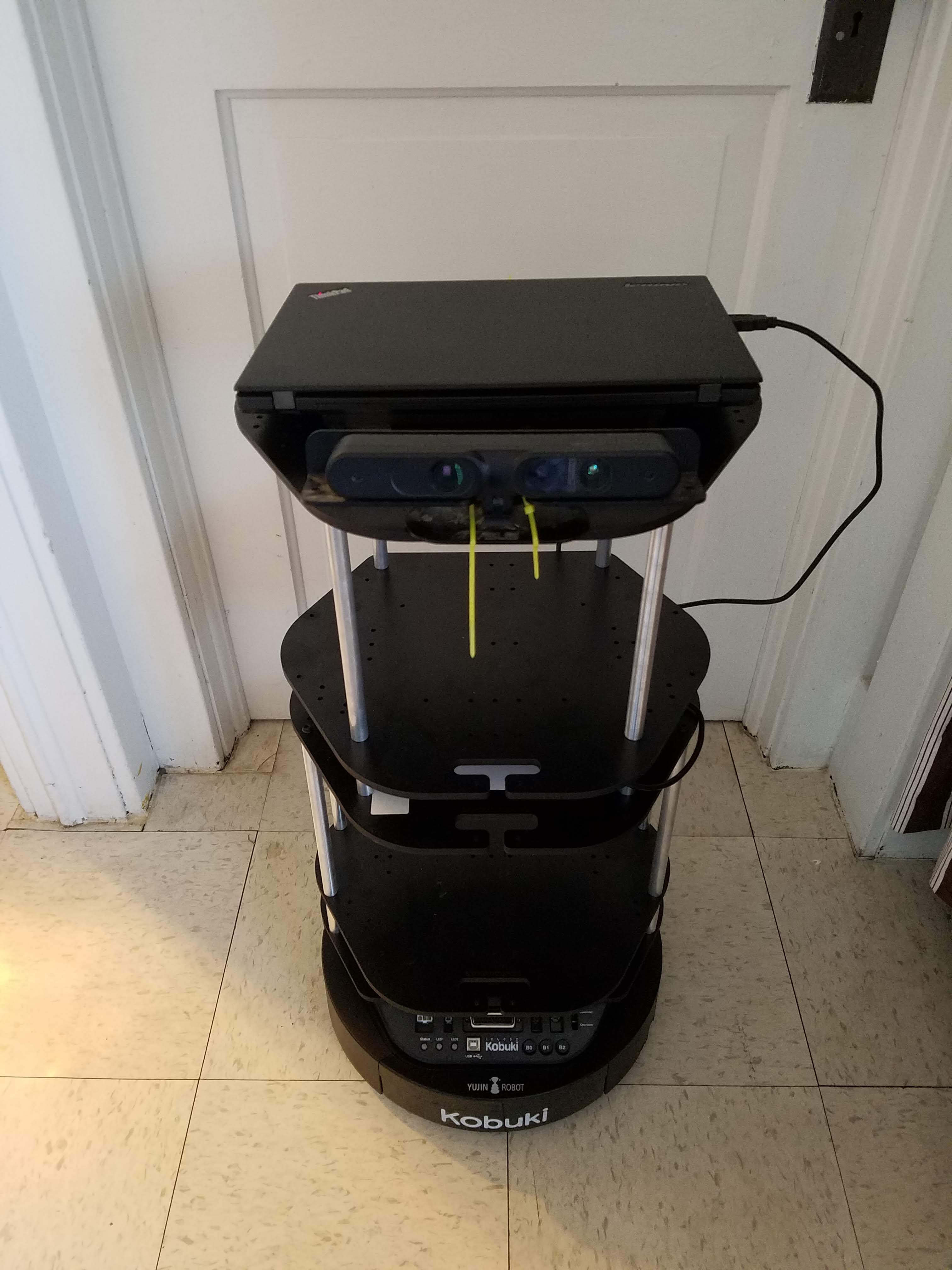}
    }%
    \subfigure[Scenario Setup]{
        \label{fig:recording}
        \centering
        \includegraphics[height=1.45in]{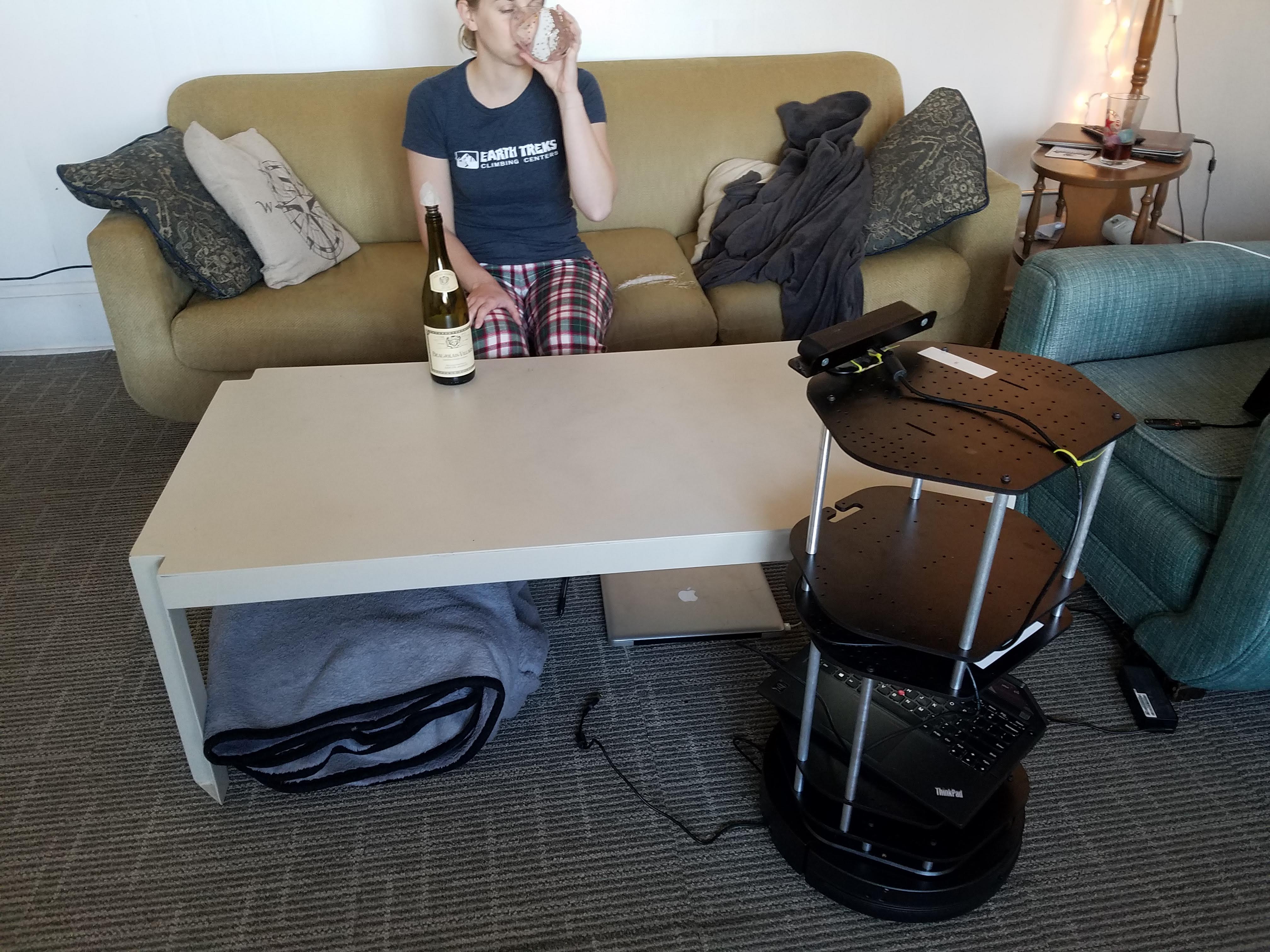}
    }
    \vspace{-4pt}
    \caption{The experiment setup used in the case studies.
    Figure \ref{fig:platform} shows the Turtlebot platform
    equipped with an ASUS Xtion Pro camera and laptop for computation.
    Figure \ref{fig:recording} shows the robot observing human activities.
    }
    \label{fig:setup}
    \vspace{-6pt}
\end{figure}

\begin{figure}
    \centering
    \includegraphics[width=0.44\textwidth]{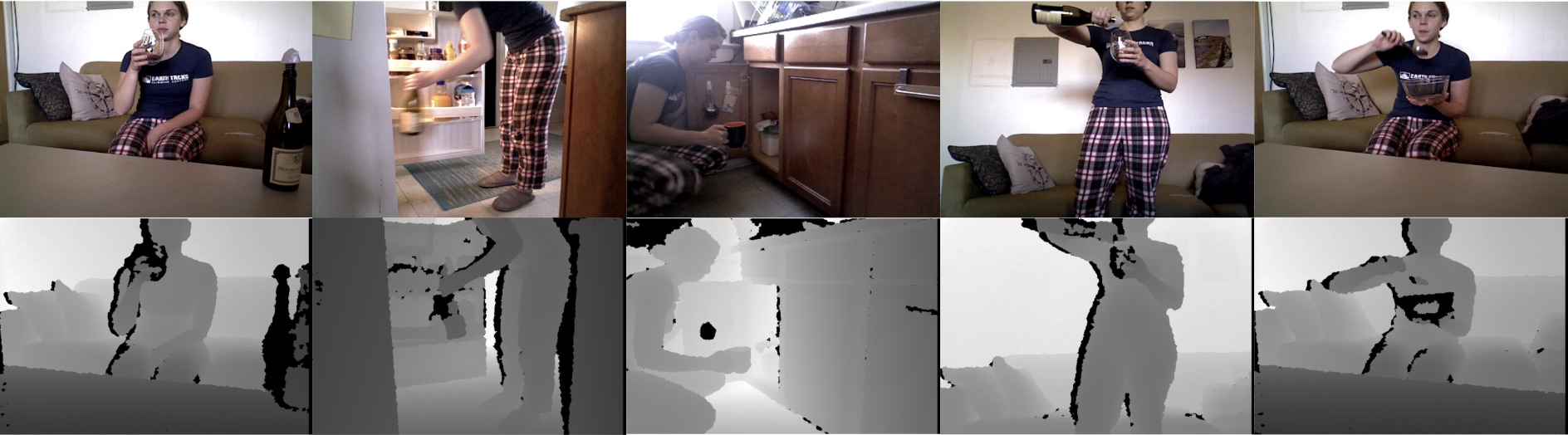}
    \vspace{-4pt}
    \caption{Color and depth images of the activities included
    in our case studies in a simulated home assistance scenario.
    From left to right, \emph{drinking wine}, \emph{storing food},
    \emph{storing dishes}, \emph{pouring wine}, and \emph{eating}.
    }
    \label{fig:home_activities}
\end{figure}

In addition to evaluating our method on the publicly available
benchmark datasets, we also implemented our approach on a physical
robot in order to demonstrate its performance in case study.
We deployed our approach on a Turtlebot robot participating in
a simulated home assistance scenario (Figure \ref{fig:setup}).
The robot has a color-depth sensor onboard
to extract 3D skeleton data and a lightweight netbook for processing.

In this scenario, five activities were defined, with Figure \ref{fig:home_activities}
showing example color and depth images for each activity.
These activities are \emph{drinking wine}, \emph{storing food},
\emph{storing dishes}, \emph{pouring wine}, and \emph{eating}.
Each activity was performed 20 times.
In order to test our approach in learning simultaneously
from observations of the human and the objects, these
activities were defined to involve similar objects and human poses.
For example, both drinking wine and pouring wine involve a
glass and a bottle; however, drinking wine is
performed while sitting down and pouring wine is performed while standing up.
Similarly, both eating and drinking wine are
activities performed by a sitting human,
but they involve different objects (respectively, a bowl and a spoon
versus a wine glass and a bottle).

The quantitative experimental results are presented in Table \ref{tab:case_accuracy}.
We can observe that the proposed approach achieves an overall accuracy of $98.33\%$ in
the case studies.
Comparison with baseline real-time approaches is also listed in Table \ref{tab:case_accuracy},
which shows that our approach is superior to
two standard real-time machine learning methods as baselines.
With only the \emph{skeletal norm} or the \emph{attribute norm} as the regularization,
our approach achieves good accuracy but less than with the
complete formulation that uses both norms.

\begin{table}[tb]
    \centering
    \caption{Accuracy obtained by our approach in the case studies and comparison to baseline real-time approaches.}
    \vspace{-4pt}
    \label{tab:case_accuracy}
    \tabcolsep=0.35cm
    \begin{tabular}{|c|c|}
    \hline
    Approach & Accuracy \\
    \hline\hline
    Support Vector Machine & 51.67\% \\
    Decision Forest & 91.67\% \\
    \hline
    Our Approach (only \emph{skeletal norm}) & 95.00\% \\
    Our Approach (only \emph{attribute norm}) & 96.67\% \\
    Our Approach & \textbf{98.33\%} \\
    \hline
    \end{tabular}
    \vspace{-6pt}
\end{table}

We also tested our method with a different
set of attributes in order to assess its ability to identify
discriminative objects.
In this setup, we defined five attribute modalities, where each modality
is the probability that an object category appeared in the view of the robot.
The 5 object categories used are the \emph{wine bottle}, \emph{glass},
\emph{fridge}, \emph{bowl}, and \emph{spoon}.
The probabilities that an object appeared in a scene were obtained from the YOLO object detection system \cite{redmon2018yolov3},
which uses a pre-trained neural network to
identify common household objects.
For example, for the activity of drinking wine, the probability of a
bottle or glass appearing would be close to $1$, and close
to $0$ for the remaining objects.

\begin{figure}[h]
    \centering
     \vspace{-6pt}
    \subfigure[Drinking Wine]{
        \label{fig:u_drinking_wine}
        \centering
        \includegraphics[width=0.2\textwidth]{{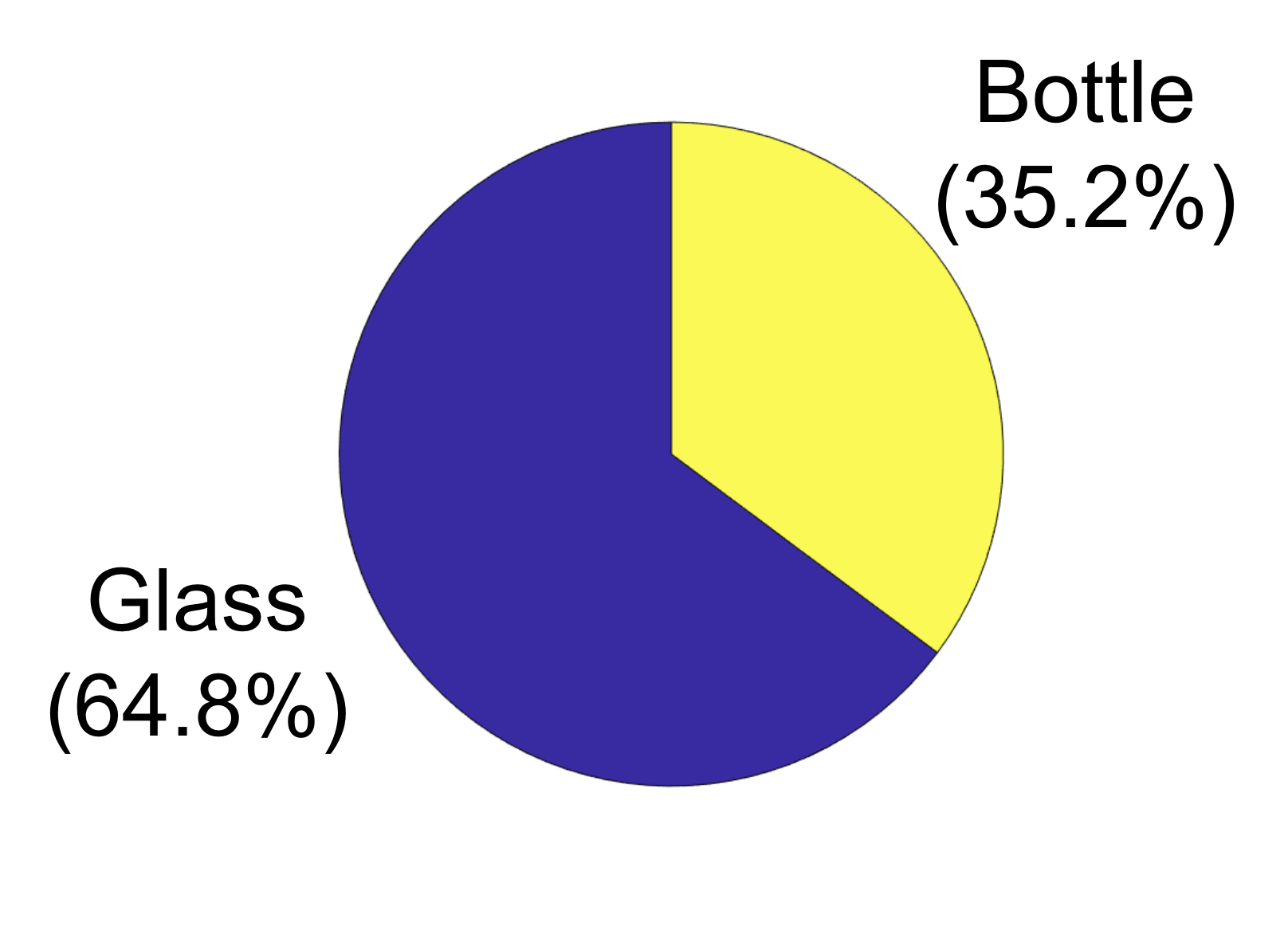}}
    }%
    \subfigure[Storing Food]{
        \label{fig:u_storing_food}
        \centering
        \includegraphics[width=0.2\textwidth]{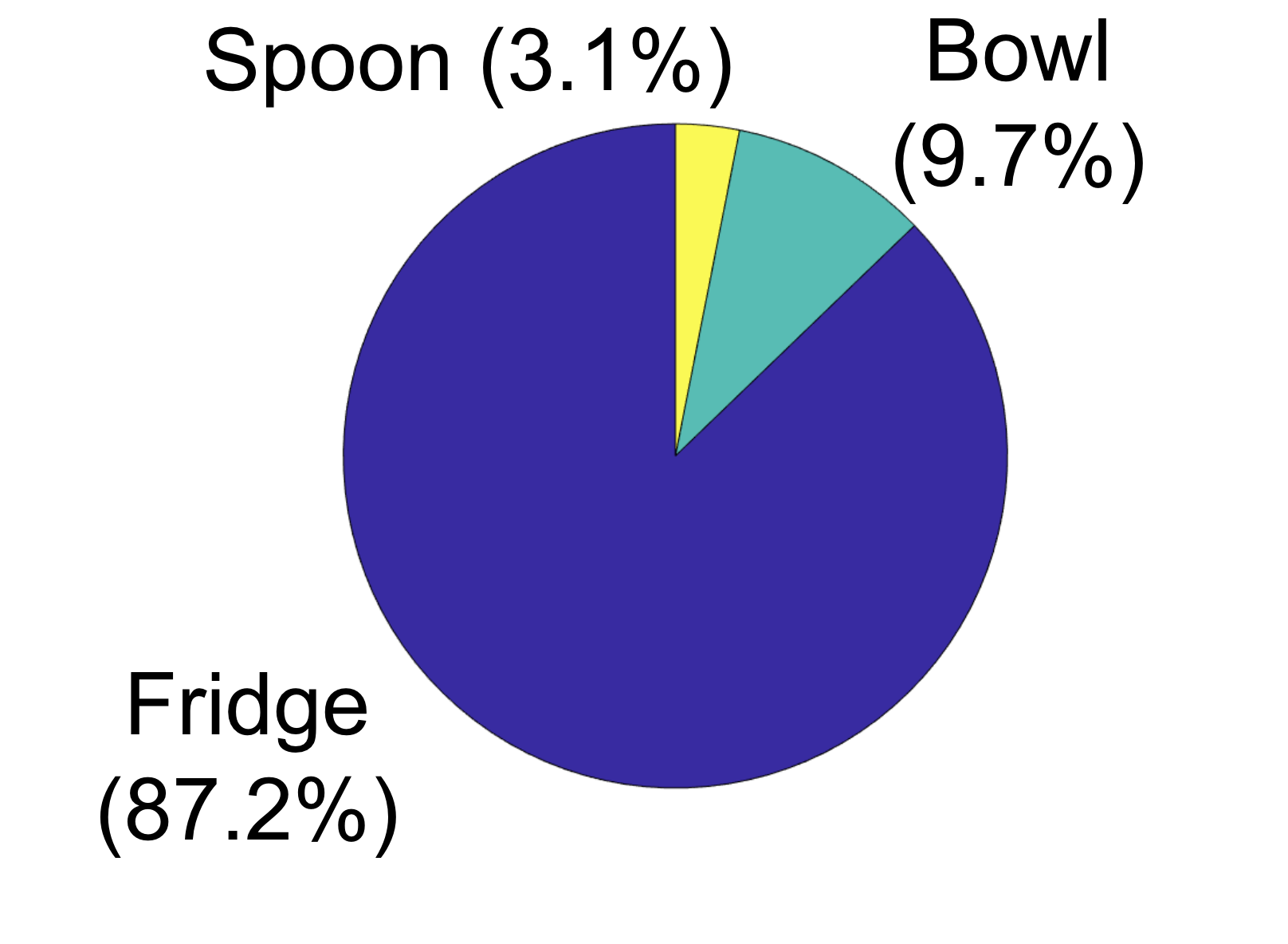}
    }
    \vspace{-6pt}
    \caption{Illustration of the distribution of weights in
    two columns of the weight matrix $\UU$.
    Figure \ref{fig:u_drinking_wine} demonstrates the weights for the
    drinking wine activity, where the glass and
    bottle are important.
    Figure \ref{fig:u_storing_food} illustrates the weights for the human activity of
    storing food, where the fridge is the most relevant object.
    }
    \label{fig:u}
     \vspace{-3pt}
\end{figure}

Using this setup, our approach is able to recognize $96.67\%$ of home
activities correctly.
Additionally, this setup allowed our approach to identify discriminative
objects, as each column of the $\UU$ matrix contained only five values,
each relating one object to that column's associated human activity.
Figure \ref{fig:u} displays two columns from the $\UU$ weight matrix.
In Figure \ref{fig:u_drinking_wine}, we observe that the bottle and
glass are the only objects receiving weights, as these are
very indicative of the drinking wine activity.
Similarly in Figure \ref{fig:u_storing_food}, we see that the fridge
receives nearly $90\%$ of the total column weight, identifying
it as being very indicative to recognize the storing food activity.
Similarly, Figure \ref{fig:u_obj} displays this relationship
for two of the rows of $\UU$.
Figure \ref{fig:u_bowl} demonstrates that for the bowl object,
the most related activity is eating.
Figure \ref{fig:u_fridge} shows this for the fridge
object, which shows that the most relevant activity is storing food,
the only activity category in which this object appears.

\begin{figure}
    \centering
    \subfigure[Bowl]{
        \label{fig:u_bowl}
        \centering
        \includegraphics[width=0.2\textwidth]{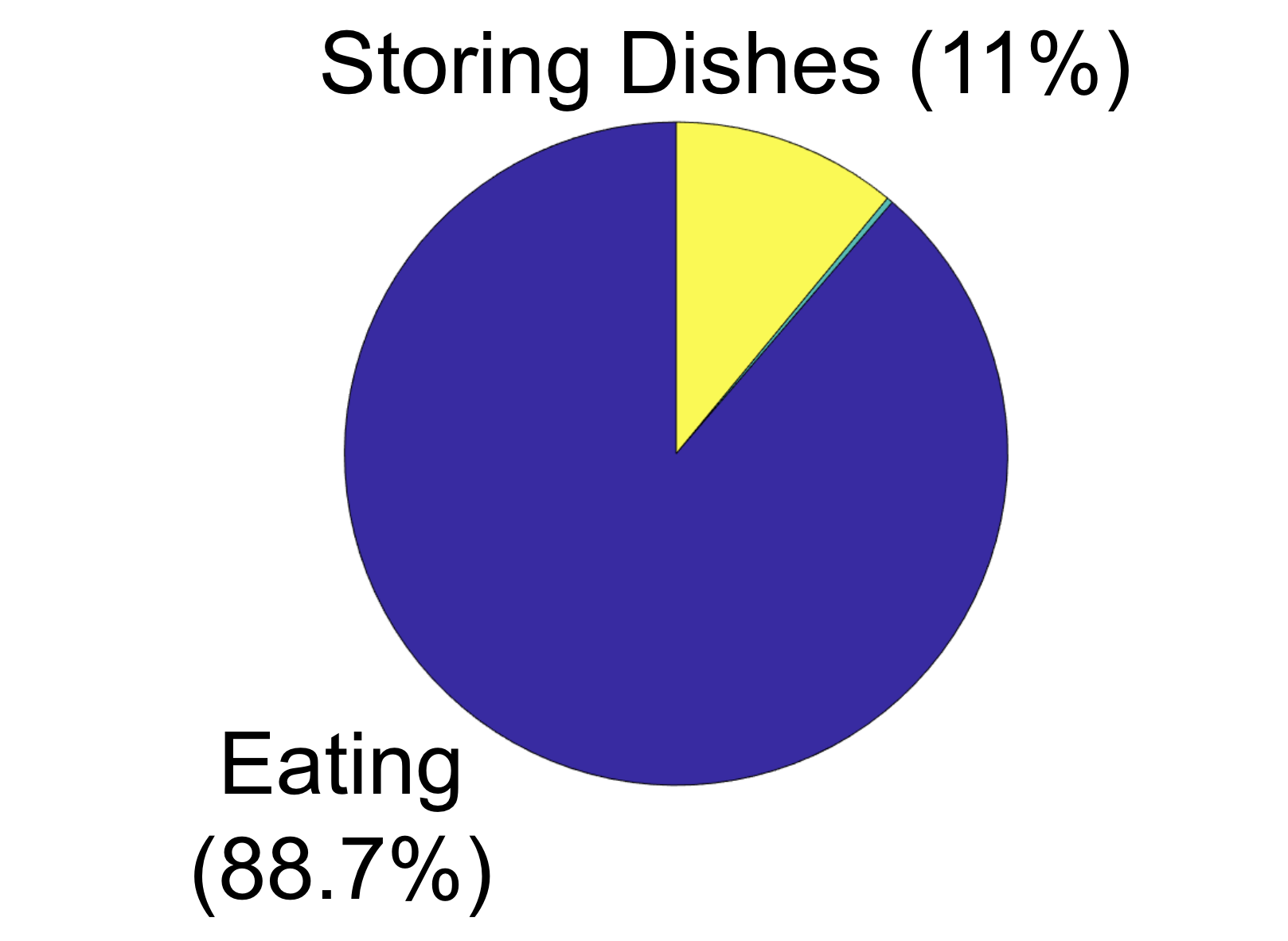}
    }%
    \subfigure[Fridge]{
        \label{fig:u_fridge}
        \centering
        \includegraphics[width=0.2\textwidth]{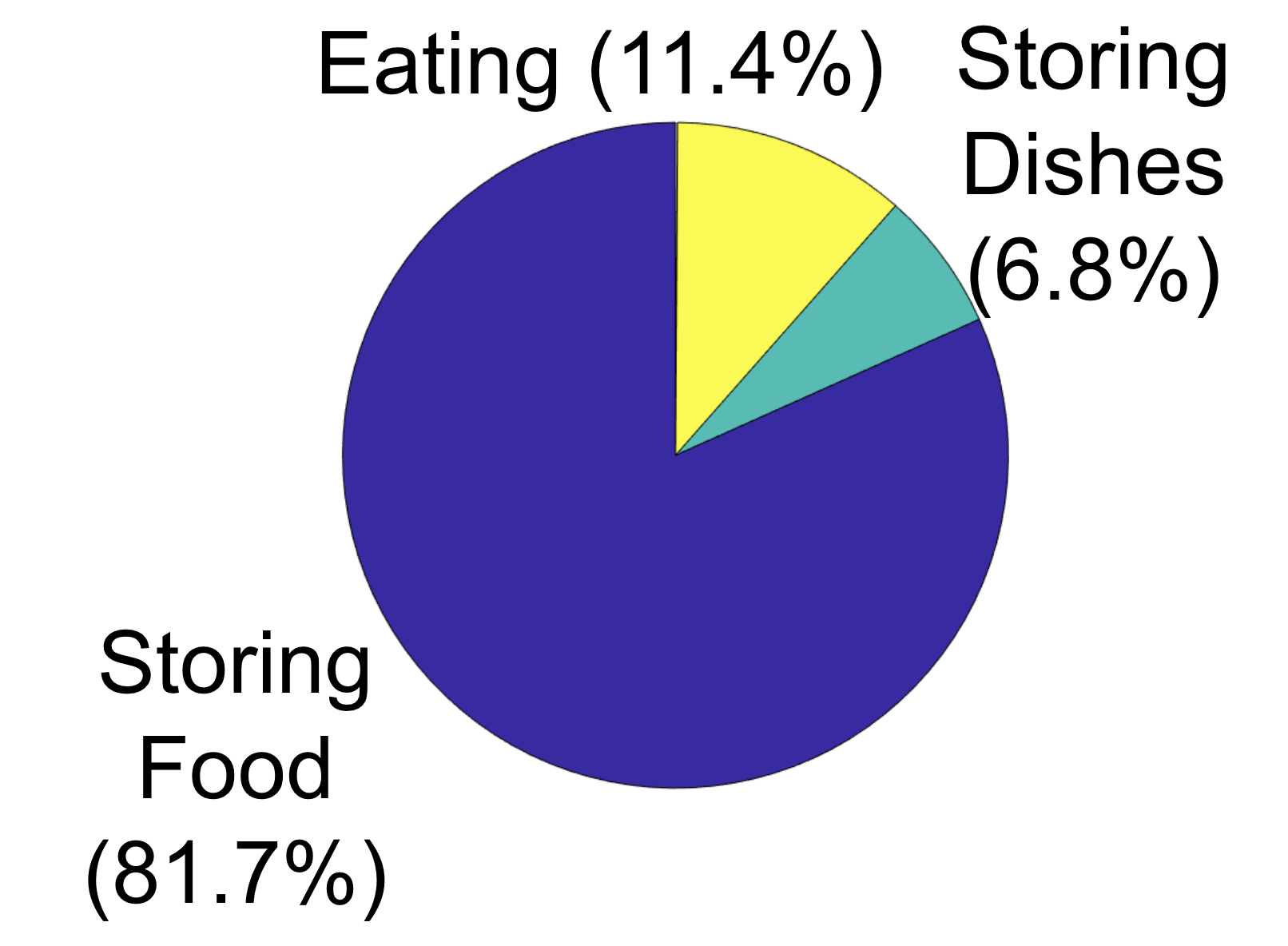}
    }
    \vspace{-6pt}
    \caption{Illustration of the distribution of weights in
    two rows of the weight matrix $\UU$.
    Figure \ref{fig:u_bowl} illustrates the weights for the
    bowl object, where the eating
    is the most relevant activity.
    Figure \ref{fig:u_fridge} depicts the weights for the
    fridge object, where storing food
    is the most relevant activity.
    }    \label{fig:u_obj}
        \vspace{-6pt}
\end{figure}

\subsection{Discussion}

\subsubsection{Real-Time Performance}
One of the major advantages of our approach
is its ability to run at real-time speeds.
For each dataset we evaluated, we analyzed the runtime of our
activity recognition approach, summarized in Table \ref{tab:real-time}.
Due to the efficiency of our proposed convex problem formulation
that integrates classification
within the loss function,
our approach obtains recognition processing speeds in excess of $2 \times 10^4$ Hz,
when executing on an Intel i5 processor with 4Gb memory.
Our approach provides a suitable solution for accurate and real-time recognition
of human activities.

\begin{table}[h]
    \centering
    \caption{Experimental results on real-time performance.}
    \begin{tabular}{|c|c|c|c|}
    \hline
    Metric & CAD-60 & MSR & Home Asst. \\
    \hline\hline
    Processing Speed (Hz) & $6.1 \times 10^4$ & $2.5 \times 10^4$ & $2.2 \times 10^4$ \\
    Time Per Frame (sec) & $1.6 \times 10^{-5}$ & $3.9 \times 10^{-5}$ & $4.5 \times 10^{-5}$ \\
    \hline
    \end{tabular}
    \label{tab:real-time}
\end{table}

\subsubsection{Hyperparameter Selection}

In our problem formulation in Eq. (\ref{eq:full}),
the hyperparameters
$\lambda_1$ and $\lambda_2$ control the importance of the
\emph{skeletal norm} and \emph{attribute norm}, respectively,
and balance these two norms with the loss function.
As our presented results have demonstrated,
the method's accuracy
decreases with either of these hyperparameters assigned to $0$.
As each norm captures different information (i.e., weights of skeletal features
or object attributes),
both are necessary for the proposed approach to achieve its
state-of-the-art accuracy.
Also, we observe that as the values of these hyperparameters
become too large, performance decreases as the loss function
relating observations to activity labels becomes less
important.
Our presented results use the hyperparameter values of
$\lambda_1 = 0.1$ and $\lambda_2 = 0.1$.

\section{Conclusion}
\label{sec:conclusion}

In this paper, we introduce a principled method for activity recognition
through simultaneous learning from observations of human poses and object attributes.
The proposed approach is capable of identifying discriminative
joints and object attributes, providing an interpretable understanding
of their importance to human activities and the relationships between
joints and objects.
We formulate activity recognition as an optimization problem
that uses a regression-like loss function to integrate teammate and object
cues to perform activity recognition,
and utilizes sparsity-inducing norms to estimate feature importance.
We introduce an iterative algorithm guaranteed to find the optimal solution.
We assess our proposed approach on two benchmark activity
recognition datasets,
and on an actual robot to show a case study.
Experimental results have shown that our approach achieves 
state-of-the-art recognition accuracy and obtains real-time performance.

\bibliographystyle{ieeetr}
\bibliography{references}

\end{document}


\newtheorem{definition}{Definition}
\newtheorem{theorem}{\textbf{Theorem}}
\newtheorem{lemma}{\textbf{Lemma}}
\newtheorem{proposition}{Proposition}
\newtheorem{property}{Property}
\newtheorem{observation}{Observation}
\newtheorem{corollary}{\textbf{Corollary}}

\maketitle
\thispagestyle{empty}
\pagestyle{empty}

\section{Proof of Theorem 1}

This section provides a mathematical proof that Algorithm 1 within the main paper decreases the value of the formulated objective function in each iteration and also converges to the optimal value.

First, we present a lemma:
\begin{lemma}\label{lemma1}
For any two given vectors $\mathbf{v}$ and $\mathbf{\tilde{v}}$, the following inequality relation holds:
$\|\mathbf{\tilde{v}}\|_2 - \frac{\|\mathbf{\tilde{v}}\|_2^2}{2\|\mathbf{v}\|_2}
\leq
\|\mathbf{v}\|_2 - \frac{\|\mathbf{v}\|_2^2}{2\|\mathbf{v}\|_2}
$.
\end{lemma}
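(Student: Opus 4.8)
The plan is to reduce this vector statement to a one-line scalar inequality. First I would simplify the right-hand side: since $\frac{\|\mathbf{v}\|_2^2}{2\|\mathbf{v}\|_2} = \frac{\|\mathbf{v}\|_2}{2}$ whenever $\mathbf{v}\neq\mathbf{0}$ (which is implicitly assumed, and is the case in Algorithm 1 where $\mathbf{v}$ is a row of the current iterate), the claimed inequality is equivalent to $\|\mathbf{\tilde{v}}\|_2 - \frac{\|\mathbf{\tilde{v}}\|_2^2}{2\|\mathbf{v}\|_2} \leq \frac{\|\mathbf{v}\|_2}{2}$. So the norms only enter through their magnitudes, and the inequality is really a statement about two nonnegative reals.

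Next I would set $a = \|\mathbf{\tilde{v}}\|_2 \geq 0$ and $b = \|\mathbf{v}\|_2 > 0$, so the goal becomes $a - \frac{a^2}{2b} \leq \frac{b}{2}$. Multiplying through by the positive quantity $2b$ turns this into $2ab - a^2 \leq b^2$, i.e. $0 \leq a^2 - 2ab + b^2 = (a-b)^2$, which is immediate. Reading the equivalences backward proves the lemma, with equality precisely when $\|\mathbf{\tilde{v}}\|_2 = \|\mathbf{v}\|_2$.

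Conceptually, the approach is just the standard fact that the concave map $t \mapsto \sqrt{t}$ lies below its tangent line at the point $t_0 = \|\mathbf{v}\|_2^2$; evaluating that tangent inequality at $t = \|\mathbf{\tilde{v}}\|_2^2$ gives exactly the displayed relation, and the algebra above is the elementary verification of this via $(a-b)^2 \geq 0$. There is no real obstacle in the proof itself; the only point requiring a word of care is the degenerate case $\mathbf{v} = \mathbf{0}$, where the terms with $\|\mathbf{v}\|_2$ in the denominator are undefined and must be excluded. The reason the lemma matters — and what I would flag when using it — is that this tangent-line bound is precisely the majorization step that lets the iterative reweighting in Algorithm 1 upper-bound the $\ell_{2,1}$-type regularizer by a smooth surrogate, which is the engine of the monotonic decrease claimed in Theorem 1.
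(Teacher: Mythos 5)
Your proof is correct and follows essentially the same route as the paper: both arguments reduce the claim to the nonnegativity of $\left(\|\mathbf{\tilde{v}}\|_2 - \|\mathbf{v}\|_2\right)^2$ and divide by $2\|\mathbf{v}\|_2$ to rearrange into the stated inequality. Your explicit caveat about the degenerate case $\mathbf{v} = \mathbf{0}$ and the tangent-line interpretation are welcome additions, but the core argument is identical to the paper's.
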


\begin{proof}
We have:
\begin{eqnarray}
\left(\Vert\widetilde{\mathbf{v}}\Vert_{2}-\Vert\mathbf{v}\Vert_{2}\right)^2 \geq 0
\end{eqnarray}
\begin{eqnarray}
\Vert\widetilde{\mathbf{v}}\Vert_{2}^{2} -  2\Vert\widetilde{\mathbf{v}}\Vert_{2}\Vert\mathbf{v}\Vert_{2} + \Vert\mathbf{v}\Vert_{2}^{2} \geq 0
\end{eqnarray}
\begin{eqnarray}
\Vert\widetilde{\mathbf{v}}\Vert_{2} - \frac{\Vert\widetilde{\mathbf{v}}_2^2}{2\Vert\mathbf{v}\Vert_2} \leq \frac{\Vert\mathbf{v}\Vert_2}{2}
\end{eqnarray}
\begin{eqnarray}
\Vert\widetilde{\textbf{v}}\Vert_{2} - \dfrac{\Vert\widetilde{\textbf{v}}\Vert_{2}^{2}}{2\Vert\textbf{v}\Vert_{2}} \leq  \Vert\textbf{v}\Vert_{2} - \dfrac{\Vert\textbf{v}\Vert_{2}^{2}}{2\Vert\textbf{v}\Vert_{2}}
\end{eqnarray}
\end{proof}

Then, we prove Theorem 1 in the main paper:

\begin{theorem}\label{thm1}
Algorithm 1 is guaranteed to converge to the optimal solution to the formulated regularized optimization problem in Eq. (5) (in the main paper).
\end{theorem}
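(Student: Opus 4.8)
The plan is to follow the standard majorize--minimize template used to analyze iteratively reweighted algorithms for $\ell_{2,1}$-norm regularized least squares. First I would make the structure of Algorithm~1 explicit: each iteration fixes a diagonal reweighting matrix $\mathbf{D}_t$ whose $i$-th diagonal entry is $\tfrac{1}{2\|\www^i_t\|_2}$, where $\www^i_t$ denotes the $i$-th row of the current iterate $\WW_t$; it then solves the resulting weighted (Sylvester-type) subproblem in closed form to obtain $\WW_{t+1} = \arg\min_{\WW}\{\ell(\WW) + \lambda\,\mathrm{Tr}(\WW^{\top}\mathbf{D}_t\WW)\}$, where $\ell(\cdot)$ is the smooth data-fidelity part of Eq.~(5); and finally it recomputes $\mathbf{D}_{t+1}$ from $\WW_{t+1}$. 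The crucial remark is that $\mathrm{Tr}(\WW^{\top}\mathbf{D}_t\WW) = \sum_i \tfrac{\|\www^i\|_2^2}{2\|\www^i_t\|_2}$ is a convex surrogate that upper-bounds $\|\WW\|_{2,1} = \sum_i\|\www^i\|_2$ up to an additive constant depending only on $\WW_t$.

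Second, I would establish monotone decrease of the true objective. Since $\WW_{t+1}$ minimizes the surrogate $g_t(\WW) = \ell(\WW) + \lambda\,\mathrm{Tr}(\WW^{\top}\mathbf{D}_t\WW)$, we have $g_t(\WW_{t+1}) \le g_t(\WW_t)$, that is,
\[
\ell(\WW_{t+1}) + \lambda\sum_i \frac{\|\www^i_{t+1}\|_2^2}{2\|\www^i_t\|_2} \;\le\; \ell(\WW_t) + \lambda\sum_i \frac{\|\www^i_t\|_2^2}{2\|\www^i_t\|_2}.
\]
I would then invoke Lemma~\ref{lemma1} with $\widetilde{\mathbf{v}} = \www^i_{t+1}$ and $\mathbf{v} = \www^i_t$, sum over $i$, multiply by $\lambda$, and add the result to the inequality above; the weighted-norm terms cancel exactly, leaving $\ell(\WW_{t+1}) + \lambda\|\WW_{t+1}\|_{2,1} \le \ell(\WW_t) + \lambda\|\WW_t\|_{2,1}$, which is precisely the objective of Eq.~(5). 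Hence the objective value is non-increasing along the iterates.

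Third, since the objective of Eq.~(5) is bounded below (a nonnegative fidelity term plus a nonnegative norm penalty), the monotone sequence of objective values converges. To upgrade this to convergence to the global optimum I would use convexity: Eq.~(5) is a convex program, so it suffices to show the limit point is stationary. At any fixed point $\WW^{\star}$ of the iteration, the first-order condition of the subproblem, $\nabla\ell(\WW^{\star}) + 2\lambda\,\mathbf{D}^{\star}\WW^{\star} = 0$ with $\mathbf{D}^{\star}$ built from $\WW^{\star}$, reduces row-wise to $\nabla\ell(\WW^{\star}) + \lambda\,\partial\|\WW^{\star}\|_{2,1} \ni 0$, which is exactly the KKT condition for Eq.~(5); hence $\WW^{\star}$ is a global minimizer. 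Combining this with the monotone-decrease argument and coercivity of the objective (which keeps the iterates bounded and guarantees a convergent subsequence) completes the proof.

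The step I expect to be the main obstacle is this last one: rigorously arguing that the limit point of $\{\WW_t\}$ is an actual stationary point, rather than merely that the sequence of objective values converges. This requires a compactness/continuity argument on the update map, together with care for the degenerate case $\|\www^i_t\|_2 = 0$, where $\mathbf{D}_t$ is undefined --- usually handled by regularizing the reweighting to $\tfrac{1}{2\sqrt{\|\www^i_t\|_2^2 + \epsilon}}$ and letting $\epsilon \to 0$, or by showing such zero rows do not arise. By contrast, the monotone-decrease part is routine once Lemma~\ref{lemma1} is in hand.
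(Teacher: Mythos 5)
Your proposal is correct in its core and follows essentially the same route as the paper: the paper's proof is exactly the majorize--minimize argument you describe, using Lemma~\ref{lemma1} with the reweighted quadratic surrogate $\www_c^\top \mathbf{D}_S^c \www_c$ to cancel the weighted terms and obtain monotone decrease of the true objective, followed by an appeal to convexity. Two differences are worth noting. First, you treat the problem as a single-block iteration in $\WW$ with one row-wise reweighting matrix, whereas Algorithm~1 alternates between two blocks: $\WW$ is updated with $\UU$ fixed using $\mathbf{D}_S^c$ (whose blocks are indexed by the $J$ skeletal groups $\www_c^j$, not rows of $\WW$), and then $\UU$ is updated with $\WW$ fixed using $\mathbf{D}_A^c$ (blocks indexed by the $O \times M$ object-attribute groups $\uuu_c^{o_m}$); your descent argument must therefore be run twice, once per half-step, and chained, and the final optimality claim must account for the block-coordinate structure --- it goes through because the loss is jointly convex in $(\WW,\UU)$ and the nonsmooth regularizers are separable across the two blocks, but that is an extra ingredient your single-block sketch does not need. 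Second, your third step is actually more careful than the paper: the paper simply asserts convergence to the optimum from convexity, while you sketch the fixed-point/stationarity argument (the reweighted normal equations reducing to the subgradient condition for the group norm) and explicitly flag the degenerate case $\|\www^j_c\|_2 = 0$ and the need for boundedness of iterates, issues the paper's proof passes over silently. So nothing in your plan would fail, but to match the theorem as stated you should extend the argument symmetrically to $\UU$ and note why blockwise stationarity suffices for global optimality of the joint problem.
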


\emph{Proof}:
In each iteration, Algorithm 1 alternately calculates $\www_c$ (in Step 5 of Algorithm 1) and $\uuu_c$ (in Step 6).
In Steps 3 and 4 respectively, we update the block matrices $\mathbf{D}_S^c$ and $\mathbf{D}_A^c$.
$\mathbf{D}_S^c \left( i + 1 \right)$ contains $J$ blocks, where the $j$-th block is $\frac{1}{2 \| \www_j^c \left( i \right) \|_2} \II_{d_T^j}$, 
and $\mathbf{D}_A^c \left( i + 1 \right)$ contains $O$ blocks, each of which
contains $M$ blocks, where the $m$-th block is
$\frac{1}{2 \| \uuu_c^{o_m} \left( i \right) \|_2} \II_{d_O^m}$.
These block structures are based on the derivatives of our skeletal and attribute norms, each of which acts on subsections of a vector.

\vspace{6pt}
\noindent\textbf{Update $\WW\left( i + 1 \right)$ in Step 5.}
After updating $\mathbf{D}_S^c$ in Step 3, we update $\WW \left( i + 1 \right)$ by treating $\UU$ as fixed:
\begin{align}
\WW \left( i + 1 \right) = \min_{\WW} \| \TT^\top \WW + \OO^\top \UU - \YY \|_F^2 \notag \\
+ \lambda_1 \sum_{c=1}^{C} \www_c^\top \mathbf{D}_S^c \left( i + 1 \right) \www_c 
\end{align}

We define $\mathcal{J}_W \left( i \right) = \| \TT^\top \WW \left( i \right) + \OO^\top \UU - \YY \|_F^2$ to write
\begin{align}
\mathcal{J}_W \left( i + 1 \right) + \lambda_1 \sum_{c=1}^{C} \www_c^\top \left( i + 1 \right) \mathbf{D}_S^c \left( i + 1 \right) \www_c \left( i + 1 \right) \notag \\
\leq \mathcal{J}_W \left( i \right) + \lambda_1 \sum_{c=1}^{C} \www_c^\top \left( i \right) \mathbf{D}_S^c \left( i \right) \www_c \left( i \right)
\end{align}

By the definition of $\mathbf{D}_S^c$,
\begin{align}
\label{eq:W_defD}
\mathcal{J}_W \left( i + 1 \right) + \lambda_1 \sum_{c=1}^{C} \sum_{j=1}^{J} \frac{\| \www_c^j \left( i + 1 \right) \|_2^2}{2 \| \www_c^j \left( i \right) \|_2} \notag \\
\leq \mathcal{J}_W \left( i \right) + \lambda_1 \sum_{c=1}^{C} \sum_{j=1}^{J} \frac{\| \www_c^j \left( i \right)}{2 \| \www_c^j \left( i \right) \|_2}
\end{align}
By Lemma \ref{lemma1}, for $c \in 1, \dots, C$ we can define that
\begin{align}
\label{eq:W_from_lemma}
\sum_{j=1}^{J} \| \www_c^j \left( i + 1 \right) \|_2 - \sum_{j=1}^{J} \frac{\| \www_c^j \left( i + 1 \right) \|_2^2}{2 \| \www_c^j \left( i \right) \|_2} \notag \\
\leq \sum_{j=1}^{J} \| \www_c^j \left( i \right) \|_2 - \sum_{j=1}^{J} \frac{\| \www_c^j \left( i \right) \|_2^2}{2 \| \www_c^j \left( i \right) \|_2}
\end{align}
By adding Eqs. (\ref{eq:W_defD}) and (\ref{eq:W_from_lemma}), we obtain
\begin{align}
\label{eq:Step5Proof}
\mathcal{J}_W \left( i + 1 \right) + \lambda_1 \sum_{c=1}^{C} \sum_{j=1}^{J} \| \www_c^j \left( i + 1 \right) \|_2 \notag \\
\leq \mathcal{J}_W \left( i \right) + \lambda_1 \sum_{c=1}^{C} \sum_{j=1}^{J} \| \www_c^j \left( i \right) \|_2
\end{align}

\vspace{6pt}
\noindent\textbf{Update $\UU\left( i + 1 \right)$ in Step 6.}
Similar to the proof for $\WW\left( i + 1 \right)$, after updating $\mathbf{D}_A^c$ in Step 4,
we update $\UU \left( i + 1 \right)$, this time treating $\WW$ as fixed:
\begin{align}
\UU \left( i + 1 \right) = \min_{\UU} \| \TT^\top \WW + \OO^\top \UU - \YY \|_F^2 \notag \\
+ \lambda_2 \sum_{c=1}^{C} \uuu_c^\top  \mathbf{D}_A^c \left( i + 1 \right) \uuu_c
\end{align}

Then, defining $\mathcal{J}_U \left( i \right) = \| \TT^\top \WW + \OO^\top \UU \left( i \right) - \YY \|_F^2$,
\begin{align}
\mathcal{J}_U \left( i + 1 \right) +
\lambda_2 \sum_{c=1}^C \uuu_c^\top \left( i + 1 \right) \mathbf{D}_A^c \left( i + 1 \right) \uuu_c \left( i + 1 \right) \notag \\
\leq \mathcal{J}_U \left( i \right) + \lambda_2 \sum_{c=1}^C \uuu_c^\top \left( i \right) \mathbf{D}_A^c \left( i \right) \uuu_c \left( i \right)
\end{align}

With the definition of $\mathbf{D}_A^c$,
\begin{align}
\label{eq:opt_less}
\mathcal{J}_U \left( i + 1 \right) +
\lambda_2 \sum_{c=1}^C \sum_{o=1}^{O} \sum_{m=1}^{M} \frac{\| \uuu_c^{o_m} \left( i + 1 \right) \|_2^2}{2 \| \uuu_c^{o_m} \left( i \right) \|_2} \notag \\
\leq \mathcal{J}_U \left( i \right) + \lambda_2 \sum_{c=1}^C \sum_{o=1}^{O} \sum_{m=1}^{M} \frac{\| \uuu_c^{o_m} \left( i \right) \|_2^2}{2 \| \uuu_c^{o_m} \left( i \right) \|_2}
\end{align}

According to Lemma \ref{lemma1}, for $c \in 1, \dots, C$ we can derive that
\begin{align}
\label{eq:norm_less}
\sum_{o=1}^O \sum_{m=1}^M \| \uuu_c^{o_m} \left( i + 1 \right) \|_2 -
\sum_{o=1}^O \sum_{m=1}^M \frac{\| \uuu_c^{o_m} \left( i + 1 \right) \|_2^2}{2 \| \uuu_c^{o_m} \left( i \right) \|_2} \notag \\
\leq \sum_{o=1}^O \sum_{m=1}^M \| \uuu_c^{o_m} \left( i \right) \|_2 -
\sum_{o=1}^O \sum_{m=1}^M \frac{\| \uuu_c^{o_m} \left( i \right) \|_2^2}{2 \| \uuu_c^{o_m} \left( i \right) \|_2}
\end{align}

After adding Eqs. (\ref{eq:opt_less}) and (\ref{eq:norm_less}), we obtain
\begin{align}\label{eq:Step6Proof}
\mathcal{J}_U \left( i + 1 \right) + \lambda_2 \sum_{c=1}^C \sum_{o=1}^O \sum_{m=1}^M \| \uuu_c^{o_m} \left( i + 1 \right) \|_2 \notag \\
\leq \mathcal{J}_U \left( i \right) + \lambda_2 \sum_{c=1}^C \sum_{o=1}^O \sum_{m=1}^M \| \uuu_c^{o_m} \left( i \right) \|_2
\end{align}

Therefore, Eq. (\ref{eq:Step5Proof}) and Eq. (\ref{eq:Step6Proof}) prove
that the value of the objective function decreases in every iteration when solving $\WW$ and $\UU$.
Because the objective function in Eq. (5) of the main paper is convex with respect to $\WW$ and $\UU$,
the proposed Algorithm 1 converges to the optimal solution. 
$\blacksquare$
